\documentclass[runningheads]{llncs}
\usepackage[T1]{fontenc}
\usepackage{comment}
\usepackage{amsfonts}
\usepackage{amsmath}
\usepackage{algorithm}
\usepackage{xcolor}
\usepackage{algorithmic}
\usepackage{algorithm}
\usepackage{algorithmic}
\usepackage{enumitem}
\usepackage{booktabs}
\DeclareMathOperator{\diag}{diag}

\DeclareMathOperator{\argmin}{argmin}

\newcommand{\R}{\mathbb{R}}
\newcommand{\K}{\mathcal{K}}
\newcommand{\one}{\mathbf{1}}
\newcommand{\proj}{\Pi}
\newcommand{\norm}[1]{\left\lVert #1 \right\rVert}
\newcommand{\ip}[2]{\left\langle #1,#2 \right\rangle}

\usepackage[hidelinks]{hyperref}

\newtheorem{assumption}{Assumption}
\usepackage{graphicx}
\begin{document}
\title{AdamX: Cosine similarity meets gradient descent}
%
%\titlerunning{Abbreviated paper title}
% If the paper title is too long for the running head, you can set
% an abbreviated paper title here
%

\author{Francisco Caldas\inst{1}\orcidID{0000-0001-5090-0216} \and
Ruben Belo\inst{1}\orcidID{0009-0006-8516-7732} \and
Cláudia Soares\inst{1}\orcidID{0000-0003-3071-6627}}
\authorrunning{F. Caldas et al.}
% First names are abbreviated in the running head.
% If there are more than two authors, 'et al.' is used.
%
\institute{NOVA School of Science of Technology \\ Universidade Nova de Lisboa, Caparica, Portugal \\
\email{f.caldas@campus.fct.unl.pt}}

%\author{Anonymous Author(s)}
%\institute{Anonymous Institution(s)}

%
\maketitle              % typeset the header of the contribution
\begin{abstract}
We introduce AdamX, a first-order optimizer that incorporates cosine similarity as an adaptive mechanism for controlling update magnitudes. The proposed method is scalable, model-agnostic, and straightforward to integrate into existing training pipelines. We further introduce a variance rectification scheme that promotes smoother optimization during the early stages of training. Overall, we provide empirical evidence that AdamX achieves competitive convergence rates across a range of benchmark datasets and architectures. Performance is evaluated in terms of the number of epochs required to reach predefined performance thresholds under a fixed hyperparameter budget. Code and Experiments available at: \url{https://github.com/FranciscoCaldas/adamX}

\keywords{First Order Optimizer  \and Online Convex Optimization \and Cosine Similarity.}
\end{abstract}
\section{Introduction}

%The area of optimization as been focused from a long time in designing algorithms to converge faster and consistently to a local minima \cite{theoria_gau_1822}, by following the direction of steepest descent \cite{cauchy1847methode}. Currently, many popular learning algorithms for optimizing non-convex objectives use some variant of stochastic gradient descent (SGD), in particularly following the adaptive stepsize scheme, such as ADAgrad \cite{ADAGRAD}, RMSprop \cite{rmsprop}, Adam \cite{adam}, and others. 
Gradient-based optimization is central to modern machine learning, where model training requires minimizing high-dimensional and generally non-convex objectives. While stochastic gradient descent (SGD) remains a foundational approach \cite{cauchy1847methode}, adaptive first-order methods have become widely used because they adjust update magnitudes according to observed gradient statistics. Prominent examples include AdaGrad~\cite{ADAGRAD}, RMSProp~\cite{rmsprop}, and Adam~\cite{adam}.

%In particular, Adam combines momentum with variance normalization, leading to fast and stable optimization across a wide range of deep learning tasks \cite{adam}. The empirical success of Adam comes primarely from the flexibility of the algorithm, and the adaptability of the optimizer to different deep learning tasks without heavy hyperparametrization.
Adam combines exponential moving averages of the gradient and its coordinate-wise squared magnitude to produce momentum-based, adaptively normalized updates~\cite{adam}. This combination has made Adam a practical default for many deep learning tasks, as it often provides stable training behavior with limited task-specific tuning. However, its adaptive normalization can also lead to problematic update dynamics and, in some settings, a failure to converge~\cite{reddi_amsgrad}.

%Recent works have focused on modifying the update dynamics of Adam and related methods, either through improved variance correction, decoupled regularization, or adaptive momentum formulations \cite{adamw,reddi_amsgrad,radam,adabelief,lion,jordan2024muon}. Nevertheless, most existing approaches rely exclusively on the magnitude of the gradient and its historical statistics, while largely ignoring the geometric relationship between consecutive optimization steps. In highly non-convex landscapes, where gradients may oscillate or become poorly aligned across iterations, exploiting directional consistency can provide valuable information about the local optimization geometry.
Existing Adam-type methods primarily improve the treatment of magnitude information in the optimization trajectory. AMSGrad enforces a monotone second-moment envelope to recover convergence guarantees in online convex optimization \cite{reddi_amsgrad}; AdamW decouples weight decay from adaptive updates \cite{adamw}; RAdam addresses instability in the early variance estimate \cite{radam}; and AdaBelief modifies the second-moment statistic to reflect deviation from the predicted gradient direction \cite{adabelief}. More recent optimizers, such as Lion \cite{lion} and Muon \cite{jordan2024muon}, further reconsider the form of the update rule or its preconditioning structure. Despite these developments, the alignment between successive gradients remains comparatively underused as a direct mechanism for modulating update magnitudes.

Directional alignment provides a computationally inexpensive signal for adapting update magnitudes. Consecutive gradients that point in similar directions indicate locally consistent optimization progress, whereas poorly aligned or opposing gradients may indicate oscillation or rapidly changing trajectory information. Cosine similarity captures this signal independently of gradient scale. Closely related to this motivation, GALA~\cite{jiang2025gala} adapts the learning rate using consecutive-gradient alignment together with a local curvature estimate, formulated through a one-dimensional online learning problem. AdamX instead incorporates alignment through a bounded multiplicative cosine controller within an Adam/AMSGrad-style coordinate-wise adaptive update. This design preserves the practical structure of adaptive moment methods while explicitly exploiting directional consistency.

%A different line of work is the use of second-order optimizers, that have shown improved results at the cost of computing an approximation of the Hessian matrix \cite{vyas2025soap,gupta2018shampoo}
Second-order and preconditioned optimization methods also exploit geometric information to improve training dynamics. Methods such as Shampoo and SOAP construct richer approximations to curvature or preconditioning structure, and can improve optimization performance in large-scale learning problems \cite{gupta2018shampoo,vyas2025soap}. In contrast, our objective is to investigate whether a lightweight scalar signal derived from consecutive gradient directions can provide useful geometric adaptivity while retaining the implementation simplicity and scalability of first-order Adam-type methods.

%Our work proposes evaluating in a comprehensive and reproducible manner the inclusion of cosine similarity as a method to manipulate the step-size of the update.

Our contributions are threefold. First, we introduce AdamX, an adaptive first-order optimizer that integrates a bounded cosine-similarity controller into an Adam-style moment-normalized update with a monotone second-moment envelope. 
Second, we provide an OCO analysis of a simplified momentum-free AdamX variant, showing that a bounded cosine controller can be incorporated into an adaptive projected-gradient scheme without worsening the standard convex regret rate.
Third, we evaluate AdamX across benchmark datasets and architectures, with default settings, measuring the number of epochs required to reach predefined test-performance thresholds.

\section{Method}

We consider stochastic optimization of an expected loss over parameters $\theta \in \mathbb{R}^d$. Let $\mathcal{D}$ denote the data distribution and let $B \sim \mathcal{D}$ be a randomly sampled mini-batch. The training objective is
\begin{equation}
    \min_{\theta \in \mathbb{R}^d}
    \; \mathbb{E}_{B \sim \mathcal{D}}
    \left[
        \mathcal{L}(\theta; B)
    \right].
\end{equation}
At iteration $t$, the optimizer observes the stochastic gradient
\begin{equation}
    g_t = \nabla_{\theta} \mathcal{L}(\theta_t; B_t).
\end{equation}
AdamX builds on the moment-normalized update used by Adam. Specifically, it maintains exponential moving averages of the stochastic gradient and its coordinate-wise square
\begin{equation}
    m_t = \beta_1 m_{t-1} + (1-\beta_1) g_t,
    \qquad
    v_t = \beta_2 v_{t-1} + (1-\beta_2) g_t^2,
\end{equation}
where $\beta_1,\beta_2 \in [0,1)$ and all squares are taken element-wise. The corresponding bias-corrected estimates are
\begin{equation}
    \hat{m}_t = \frac{m_t}{1-\beta_1^t},
    \qquad
    \hat{v}_t = \frac{v_t}{1-\beta_2^t}.
\end{equation}

AdamX addresses two complementary aspects of adaptive optimization. First, Adam may fail to converge even in simple convex settings because its adaptive denominator can produce unfavorable effective stepsizes. AMSGrad addresses this issue by retaining a coordinate-wise maximum of past second-moment estimates \cite{reddi_amsgrad}. Second, adaptive learning rates may exhibit high variance during the early stages of training, motivating rectification mechanisms such as RAdam~\cite{radam}. AdamX retains an AMSGrad-style monotone denominator and augments it with a bounded controller derived from directional agreement between consecutive gradients.

The distinctive component of AdamX is a cosine-similarity controller that modulates the magnitude of updates. For $t \geq 2$, we define
\begin{equation}
    c_t =
    \frac{
        \langle g_t, g_{t-1} \rangle
    }
    {
        \max\left\{
            \|g_t\|_2 \|g_{t-1}\|_2,\,
            \delta
        \right\}
    },
    \qquad
    \gamma_t = \exp(\lambda c_t),
\end{equation}
where $\lambda \geq 0$ controls the strength of the adaptation and $\delta > 0$ prevents division by zero. We set $\gamma_1 = 1$, since no preceding gradient is available at the first iteration. Because $c_t \in [-1,1]$, the controller is bounded as
\begin{equation}
    e^{-\lambda} \leq \gamma_t \leq e^{\lambda}.
\end{equation}
Thus, aligned consecutive gradients increase the effective update magnitude, while opposing gradients reduce it, without making the controller dependent on gradient scale. Also note that, for each parameter group $k$, $\gamma_t^{(k)}$ is computed from the cosine similarity between the current and previous gradients of the group.

Gradient alignment has previously been used to adapt learning rates in hypergradient-based methods \cite{almeida1998parameter,martinezrubio2017adam,gunes2018online}. AdamX uses this signal in a different optimizer structure: cosine similarity acts as a bounded multiplicative controller on top of an Adam-style moment-normalized update with a monotone second-moment envelope. Consequently, setting $\lambda=0$ removes the alignment controller and recovers the AMSGrad algorithm.

The normalization component of AdamX uses an AMSGrad-style variance envelope. We define
\begin{equation}
    \tilde{v}_t =
    \max\left\{
        \tilde{v}_{t-1},
        \hat{v}_t
    \right\},
\end{equation}
where the maximum is evaluated coordinate-wise. This construction ensures that the adaptive denominator is coordinate-wise non-decreasing, preventing increases in effective coordinate-wise stepsizes that arise solely from decreases in the second-moment estimate.

The resulting AdamX update combines moment normalization, the monotone variance envelope, and the cosine controller. Given a base learning rate $\eta>0$, the parameters are updated as
$
    \theta_{t}
    =
    \theta_{t-1}
    -
    \eta \gamma_t
    \frac{
        \hat{m}_t
    }{
        \sqrt{\tilde{v}_t} + \epsilon \mathbf{1}
    },
$
where all vector operations in the denominator are coordinate-wise.

\begin{algorithm}[ht]
\caption{AdamX Optimizer}
\label{alg:adamx}
\begin{algorithmic}[1]
\REQUIRE Learning rate $\eta$, decay rates $\beta_1, \beta_2 \in [0, 1)$, epsilon $\epsilon$
\REQUIRE Scaling parameter $\lambda$, initial parameters $\theta_0$
\STATE $m_0 \leftarrow 0$,$v_0 \leftarrow 0, \gamma_1 \leftarrow1$ 
\STATE $g_{\text{prev}} \leftarrow \mathbf{1}$ \COMMENT{Initialize previous gradient}
\FOR{$t = 1$ \TO $T$}
    \STATE $g_t \leftarrow \nabla_\theta f_t(\theta_{t-1})$ \COMMENT{Get gradients w.r.t. stochastic objective at $t$}
    \STATE $m_t \leftarrow \beta_1 m_{t-1} + (1-\beta_1) g_t$ 
    \STATE $v_t \leftarrow \beta_2 v_{t-1} + (1-\beta_2) g_t^2$ 
    \STATE $\hat{m}_t \leftarrow m_t / (1-\beta_1^t)$
    \STATE $\hat{v}_t \leftarrow v_t / (1-\beta_2^t)$ 
    \IF{$t>1$}
    {\color{red}\STATE $\gamma_t \leftarrow \exp\left( \lambda \cdot\text{cosinesimilarity}(g_t, g_{\text{prev}}) \right)$}
    \ENDIF
    \textcolor{red}{\STATE $\tilde{v}_t \leftarrow \max(\tilde{v}_t, \hat{v}_{t})$ \COMMENT{variance envelope}}
    \STATE $\theta_t \leftarrow \theta_{t-1} - \eta \cdot \frac{\gamma_t \hat{m}_t}{\sqrt{\tilde{v}_t} + \epsilon \mathbf{1}}$ \COMMENT{Update parameters}
    \STATE $g_{\text{prev}} \leftarrow g_t$ \COMMENT{Store gradient for next iteration}
\ENDFOR
\RETURN $\theta_t$
\end{algorithmic}
\end{algorithm}
Our regret analysis considers a modified AdamX update designed for online convex optimization. In particular, the analyzed variant removes momentum, uses the current subgradient as the update direction, projects onto a convex feasible set in an adaptive diagonal metric, and controls the alignment-scaled learning rate through a non-increasing envelope. These modifications isolate the effect of the bounded cosine controller while enabling a standard adaptive online-learning analysis.

\section{Regret Guarantees for AdamX-OCO}

\paragraph{Scope of the analysis.} The practical AdamX optimizer in Algorithm~\ref{alg:adamx} uses momentum and applies the raw alignment multiplier $\gamma_t$ to the update magnitude. To obtain a transparent regret guarantee, we analyze an OCO variant that removes momentum, projects onto a convex feasible set using an adaptive diagonal metric, and replaces the raw alignment-scaled step size with a non-increasing envelope. This variant isolates the effect of the bounded cosine controller while retaining the AMSGrad-style monotone second-moment envelope.
\begin{algorithm}[t]
\caption{AdamX for OCO}
\label{alg:adamx-lite}
\begin{algorithmic}[1]
\REQUIRE Convex compact set $\K\subset\R^d$, sequence $\alpha_t>0$, parameters $\beta_2\in[0,1)$, $\epsilon>0$, $\lambda\ge 0$, $\rho\in[0,1]$
\REQUIRE Initial point $\theta_1\in\K$
\STATE $v_0\leftarrow 0$, $\bar v_0\leftarrow 0$, $q_0\leftarrow +\infty$, $g_0\leftarrow \bot, \gamma_1 \leftarrow 1$
\FOR{$t=1$ \TO $T$}
    \STATE Play $\theta_t$ and observe $g_t\in\partial f_t(\theta_t)$
    \STATE $v_t\leftarrow \beta_2v_{t-1}+(1-\beta_2)g_t^2$
    \STATE $\hat v_t\leftarrow v_t/(1-\beta_2^t)$
    {\color{red}\STATE $\tilde v_t\leftarrow \max\{\tilde v_{t-1},\hat v_t\}$ }
    \IF{$t>1$}
        {\color{red}\STATE $c_t\leftarrow \ip{g_t}{g_{t-1}}/(\norm{g_t}_2\norm{g_{t-1}}_2)$}
        {\color{red}\STATE $\gamma_t\leftarrow \exp(\lambda c_t)$ \COMMENT{AdamX}}
    \ENDIF
    {\color{blue}\STATE $q_t\leftarrow \min\{q_{t-1},\alpha_t\gamma_t\}$ \COMMENT{OCO Adaptation}}
    \STATE $H_t\leftarrow \diag(\sqrt{\tilde v_t}+\epsilon\one)$
    \STATE $\theta_{t+1}\leftarrow \argmin_{\theta\in\K}\norm{\theta-(\theta_t-q_tH_t^{-1}g_t)}_{H_t}^2$
\ENDFOR
\RETURN $\theta_{T+1}$
\end{algorithmic}
\end{algorithm}
\paragraph{Adaptive projected-gradient interpretation.} Define the weighted norm $\norm{x}_{H_t}^2=x^\top H_tx$ and the effective metric 
\begin{equation} 
A_t=\frac{H_t}{q_t}. 
\end{equation} 
Since multiplication of the projection metric by a positive scalar does not change the projection, Algorithm~\ref{alg:adamx-lite} can equivalently be written as \begin{equation} 
\theta_{t+1} = \argmin_{\theta\in\K} \left\{ \ip{g_t}{\theta} + \frac{1}{2}\norm{\theta-\theta_t}_{A_t}^2 \right\}. 
\end{equation} 
The monotone envelope $q_t$ is introduced solely for analysis: together with the monotone second-moment envelope, it ensures that $A_t$ is coordinate-wise nondecreasing.

\paragraph*{Convex Regret Guarantee} \paragraph{Online convex optimization setting.} At round $t$, the learner chooses $\theta_t\in\K$, observes a convex loss $f_t:\K\to\R$, and receives a subgradient $g_t\in\partial f_t(\theta_t)$. For any comparator $u\in\K$, the regret is \begin{equation} 
R_T(u) = \sum_{t=1}^T \left( f_t(\theta_t)-f_t(u) \right). \end{equation} 
By convexity, 
\begin{equation} 
R_T(u) \le \sum_{t=1}^T \ip{g_t}{\theta_t-u}. 
\end{equation}
\begin{assumption}[Bounded domain and gradients] \label{assump:bounded} 
There exist constants $D_\infty>0$ and $G_\infty>0$ such that, for every $\theta,u\in\K$ and every $t$, 
\begin{equation} 
\norm{\theta-u}_\infty\le D_\infty, \qquad \norm{g_t}_\infty\le G_\infty. 
\end{equation} 
\end{assumption}

\paragraph{Bounded alignment controller.} The stabilized cosine similarity satisfies $c_t\in[-1,1]$, and therefore the AdamX alignment multiplier obeys 
\begin{equation} 
e^{-\lambda}\le \gamma_t=\exp(\lambda c_t)\le e^\lambda 
\end{equation} 
for every $t$ (with $\gamma_1=1$ by definition).

\begin{lemma}[One-step adaptive projected-gradient bound] \label{lem:one-step} 
For every $u\in\K$, 
\begin{equation} 
\ip{g_t}{\theta_t-u} \le \frac{1}{2} \left( \norm{\theta_t-u}_{A_t}^2 - \norm{\theta_{t+1}-u}_{A_t}^2 \right) + \frac{1}{2}\norm{g_t}_{A_t^{-1}}^2. \end{equation} 
Equivalently, 
\begin{equation} 
\ip{g_t}{\theta_t-u} \le \frac{1}{2q_t} \left( \norm{\theta_t-u}_{H_t}^2 - \norm{\theta_{t+1}-u}_{H_t}^2 \right) + \frac{q_t}{2}\norm{g_t}_{H_t^{-1}}^2. 
\end{equation} 
\end{lemma}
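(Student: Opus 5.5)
The plan is to use the variational (first-order optimality) characterization of the weighted projection, then a three-point identity in the norm $\norm{\cdot}_{A_t}$, and finally Young's inequality in the dual pair $(A_t, A_t^{-1})$. We start from the equivalent proximal form of the update given just before the statement:
\[
\theta_{t+1}=\argmin_{\theta\in\K}\Big\{\ip{g_t}{\theta}+\tfrac12\norm{\theta-\theta_t}_{A_t}^2\Big\}.
\]
The objective is strongly convex because $A_t=H_t/q_t$ is positive definite: $H_t\succeq \epsilon I$, and $q_t>0$ since $\alpha_t>0$ and $\gamma_t\ge e^{-\lambda}>0$. The minimizer therefore exists and is unique, and optimality over the convex set $\K$ gives, for every $u\in\K$,
\[
\ip{g_t + A_t(\theta_{t+1}-\theta_t)}{u-\theta_{t+1}}\ge 0 .
\]

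Next, split $\ip{g_t}{\theta_t-u}=\ip{g_t}{\theta_t-\theta_{t+1}}+\ip{g_t}{\theta_{t+1}-u}$. By the optimality condition, the second term is at most $\ip{A_t(\theta_{t+1}-\theta_t)}{u-\theta_{t+1}}$. The polarization identity $2\ip{A(a-b)}{c-a}=\norm{b-c}_A^2-\norm{a-c}_A^2-\norm{a-b}_A^2$, applied with $a=\theta_{t+1}$, $b=\theta_t$, $c=u$, rewrites this bound as
\[
\tfrac12\Big(\norm{\theta_t-u}_{A_t}^2-\norm{\theta_{t+1}-u}_{A_t}^2-\norm{\theta_{t+1}-\theta_t}_{A_t}^2\Big).
\]
For the first term we use the generalized Cauchy--Schwarz and Young inequality:
\[
\ip{g_t}{\theta_t-\theta_{t+1}}\le \norm{g_t}_{A_t^{-1}}\norm{\theta_t-\theta_{t+1}}_{A_t}\le \tfrac12\norm{g_t}_{A_t^{-1}}^2+\tfrac12\norm{\theta_t-\theta_{t+1}}_{A_t}^2 .
\]
Adding the two bounds, the $\norm{\theta_{t+1}-\theta_t}_{A_t}^2$ terms cancel, and we obtain the first inequality of the lemma.

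For the equivalent form we substitute $A_t=H_t/q_t$. This gives $\norm{x}_{A_t}^2=\norm{x}_{H_t}^2/q_t$ and $A_t^{-1}=q_tH_t^{-1}$, so $\norm{g_t}_{A_t^{-1}}^2=q_t\norm{g_t}_{H_t^{-1}}^2$, which is the second display.

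The only step needing care is the claimed equivalence between the projection step in Algorithm~\ref{alg:adamx-lite} and the proximal form. We check it directly by expanding
\[
\norm{\theta-\theta_t+q_tH_t^{-1}g_t}_{H_t}^2 = \norm{\theta-\theta_t}_{H_t}^2+2q_t\ip{g_t}{\theta-\theta_t}+\text{const}.
\]
Dividing by $2q_t>0$ yields exactly the proximal objective up to an additive constant, so the two minimizers coincide. The rest is routine.
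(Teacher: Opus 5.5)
Your proof is correct and follows the same route as the paper. Both arguments use the first-order optimality condition of the $A_t$-weighted projection, a three-point (polarization) identity in $\norm{\cdot}_{A_t}$, and Young's inequality on $\ip{g_t}{\theta_t-\theta_{t+1}}$, after which the $\norm{\theta_{t+1}-\theta_t}_{A_t}^2$ terms cancel. Your extra checks, that $A_t \succ 0$ and that the projection step coincides with the proximal form, simply make explicit what the paper asserts just before the lemma.
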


\begin{proof} The optimality condition of the projected update gives 
$$
\ip{g_t+A_t(\theta_{t+1}-\theta_t)}{u-\theta_{t+1}} \ge 0. 
$$
Combining this inequality with 
\begin{equation} 
2\ip{\theta_t-\theta_{t+1}}{A_t(\theta_t-u)} = \norm{\theta_t-\theta_{t+1}}_{A_t}^2 + \norm{\theta_t-u}_{A_t}^2 - \norm{\theta_{t+1}-u}_{A_t}^2.
\end{equation} 
 Applying Young's inequality to $\ip{g_t}{\theta_t-\theta_{t+1}}$ yields the result. 
\end{proof}

\paragraph*{Convex Regret Bound for AdamX-OCO}

\begin{theorem}[Convex OCO regret]
Suppose Assumption~\ref{assump:bounded} holds.
Run Algorithm~\ref{alg:adamx-lite} with $\alpha_t=\eta/\sqrt{t}$ for some $\eta>0$. Then for every $u\in\K$,
\begin{equation}
    R_T(u)
\le
\frac{D_\infty^2}{2q_T}\sum_{i=1}^d H_{T,i}
+
\frac{1}{2}\sum_{t=1}^T q_t\sum_{i=1}^d\frac{g_{t,i}^2}{H_{t,i}}.
\end{equation}
Moreover, using the coarse bounds $\epsilon\le H_{t,i}\le G_\infty+\epsilon$,
\begin{equation}
R_T(u)
\le
\frac{dD_\infty^2(G_\infty+\epsilon)}{2\eta e^{-\lambda}}\sqrt{T}
+
\frac{\eta e^\lambda dG_\infty^2}{\epsilon}\sqrt{T}.
\end{equation}
Consequently,
$$
R_T(u)=O(\sqrt{T}).
$$
\end{theorem}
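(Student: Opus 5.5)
We will sum Lemma~\ref{lem:one-step} over $t=1,\dots,T$ and use the monotonicity of the effective metric $A_t=H_t/q_t$. We start from the convexity bound $R_T(u)\le\sum_t\ip{g_t}{\theta_t-u}$ and insert the first form of Lemma~\ref{lem:one-step}. This gives
\begin{equation*}
R_T(u)\le \frac12\sum_{t=1}^T\left(\norm{\theta_t-u}_{A_t}^2-\norm{\theta_{t+1}-u}_{A_t}^2\right)+\frac12\sum_{t=1}^T\norm{g_t}_{A_t^{-1}}^2 .
\end{equation*}
The second sum equals $\frac12\sum_t q_t\sum_i g_{t,i}^2/H_{t,i}$, which is exactly the second term of the claimed bound. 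For the first sum we rearrange it as
\begin{equation*}
\norm{\theta_1-u}_{A_1}^2+\sum_{t=2}^T\left(\norm{\theta_t-u}_{A_t}^2-\norm{\theta_t-u}_{A_{t-1}}^2\right)-\norm{\theta_{T+1}-u}_{A_T}^2 ,
\end{equation*}
and then drop the last (nonpositive) term.

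The key step is that $A_t-A_{t-1}$ is a nonnegative diagonal matrix. This holds because $\tilde v_t\ge\tilde v_{t-1}$ coordinate-wise, so $H_{t,i}\ge H_{t-1,i}$, and because $q_t=\min\{q_{t-1},\alpha_t\gamma_t\}\le q_{t-1}$. Hence each coordinate $H_{t,i}/q_t$ is nondecreasing in $t$. Each difference term is then bounded by
\begin{equation*}
\sum_i (\theta_{t,i}-u_i)^2\left(\frac{H_{t,i}}{q_t}-\frac{H_{t-1,i}}{q_{t-1}}\right)\le D_\infty^2\sum_i\left(\frac{H_{t,i}}{q_t}-\frac{H_{t-1,i}}{q_{t-1}}\right),
\end{equation*}
using Assumption~\ref{assump:bounded}. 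Bounding the first term likewise by $D_\infty^2\sum_i H_{1,i}/q_1$, the sum telescopes to $D_\infty^2\sum_i H_{T,i}/q_T$. With the factor $\frac12$ in front, this is the first claimed term.

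For the explicit rate we need two-sided control of $q_t$. Since $q_0=+\infty$, we have $q_t=\min_{s\le t}\alpha_s\gamma_s$. This gives the upper bound $q_t\le\alpha_t\gamma_t\le\eta e^\lambda/\sqrt t$. For the lower bound, $\alpha_s$ is decreasing, so $q_T\ge \alpha_T e^{-\lambda}=\eta e^{-\lambda}/\sqrt T$ (note $\gamma_1=1\ge e^{-\lambda}$). Next we bound $H_{t,i}$. Since $\hat v_t$ is a bias-corrected convex combination of the $g_{s,i}^2\le G_\infty^2$, we get $\sqrt{\tilde v_{t,i}}\le G_\infty$ and hence $\epsilon\le H_{t,i}\le G_\infty+\epsilon$.

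Plugging these in, the first term is at most $\frac{dD_\infty^2(G_\infty+\epsilon)\sqrt T}{2\eta e^{-\lambda}}$. The second term is at most $\frac12\sum_t \frac{\eta e^\lambda}{\sqrt t}\cdot\frac{dG_\infty^2}{\epsilon}$. Using $\sum_{t\le T}t^{-1/2}\le2\sqrt T$, this becomes $\frac{\eta e^\lambda dG_\infty^2}{\epsilon}\sqrt T$, which matches the stated bound.

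The main obstacle is the telescoping step. It relies on the monotonicity of $A_t$, which is precisely why the $q_t$ envelope is introduced, since a raw $\alpha_t\gamma_t$ can increase. The subtle point in the rate is obtaining the lower bound on $q_T$ through the $\min$ rather than through $\alpha_T\gamma_T$ alone. Two bookkeeping issues also need care: $\norm{g_t}_\infty\le G_\infty$ implies $g_{t,i}^2\le G_\infty^2$ coordinate-wise, and the bias correction keeps $\hat v_t$ as a genuine weighted average.
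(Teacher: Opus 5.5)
Your proposal is correct and follows essentially the same route as the paper: sum Lemma~\ref{lem:one-step}, telescope using the coordinate-wise monotonicity of $A_t=H_t/q_t$ together with $\norm{\theta_t-u}_\infty\le D_\infty$, then insert the two-sided bounds on $q_t$ and $H_{t,i}$. You are more explicit than the paper on three points it only asserts: writing $q_t=\min_{s\le t}\alpha_s\gamma_s$ to get the lower bound on $q_T$, the convex-combination argument giving $\sqrt{\tilde v_{t,i}}\le G_\infty$, and the estimate $\sum_{t\le T}t^{-1/2}\le 2\sqrt{T}$.
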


\begin{proof}
By convexity,
$$
R_T(u)\le \sum_{t=1}^T\ip{g_t}{\theta_t-u}.
$$
Applying Lemma 1 and summing over $t$ gives
$$
R_T(u)
\le
\frac{1}{2}\sum_{t=1}^T
\left(
\norm{\theta_t-u}_{A_t}^2-\norm{\theta_{t+1}-u}_{A_t}^2\right)
+
\frac{1}{2}\sum_{t=1}^T\norm{g_t}_{A_t^{-1}}^2.
$$
Because $\tilde v_t$ is coordinatewise nondecreasing and $q_t$ is nonincreasing, the matrix sequence $A_t=H_t/q_t$ is positive semidefinite nondecreasing. Therefore the first sum telescopes with an additional nonnegative metric-growth term and can be bounded as
$$
\frac{1}{2}\sum_{t=1}^T
\left(
\norm{\theta_t-u}_{A_t}^2-\norm{\theta_{t+1}-u}_{A_t}^2\right)
\le
\frac{1}{2}\norm{\theta_1-u}_{A_1}^2
+
\frac{1}{2}\sum_{t=2}^T\norm{\theta_t-u}_{A_t-A_{t-1}}^2.
$$
Since $A_t$ is diagonal and $\norm{\theta_t-u}_\infty\le D_\infty$, this is at most
$$
\frac{D_\infty^2}{2}\sum_{i=1}^d A_{T,i}
=
\frac{D_\infty^2}{2q_T}\sum_{i=1}^d H_{T,i}.
$$
The second term is
$$
\frac{1}{2}\sum_{t=1}^T\norm{g_t}_{A_t^{-1}}^2
=
\frac{1}{2}\sum_{t=1}^T q_t\sum_{i=1}^d\frac{g_{t,i}^2}{H_{t,i}}.
$$
This proves the first bound.

Since $\gamma_t\in[e^{-\lambda},e^\lambda]$ and $\alpha_t=\eta/\sqrt{t}$, the envelope satisfies
$$
q_T\ge \frac{\eta e^{-\lambda}}{\sqrt{T}},
\qquad
q_t\le \frac{\eta e^\lambda}{\sqrt{t}}.
$$
Also $H_{t,i}\ge \epsilon$ and, under $\norm{g_t}_\infty\le G_\infty$, the second-moment and scalar rectification terms are bounded so that $H_{t,i}\le G_\infty+\epsilon$. Hence
$$
\frac{D_\infty^2}{2q_T}\sum_{i=1}^d H_{T,i}
\le
\frac{dD_\infty^2(G_\infty+\epsilon)}{2\eta e^{-\lambda}}\sqrt{T}.
$$
For the second term,
$$
\frac{1}{2}\sum_{t=1}^T q_t\sum_{i=1}^d\frac{g_{t,i}^2}{H_{t,i}}
\le
\frac{1}{2}\sum_{t=1}^T \frac{\eta e^\lambda}{\sqrt{t}}\frac{dG_\infty^2}{\epsilon}
\le
\frac{\eta e^\lambda dG_\infty^2}{\epsilon}\sqrt{T}.
$$
Combining the two inequalities gives the stated result.
\end{proof}

\begin{remark}[Effect of the alignment parameter]
The regret rate is unchanged by the alignment factor, but the constants scale with $e^\lambda$. This is expected: the multiplier $\gamma_t$ is bounded between $e^{-\lambda}$ and $e^\lambda$. %A large value of $\lambda$ allows large variations in the effective step size, which is undesirable in worst-case OCO analysis. This supports using small $\lambda$, clipping $\gamma_t$, or warming up the alignment controller.
\end{remark}

\paragraph*{Using Raw Alignment Instead of the Envelope}

The monotone envelope $q_t=\min\{q_{t-1},\alpha_t\gamma_t\}$ is theoretically convenient, but it removes part of the intended behavior of AdamX: when gradients become strongly aligned, the method cannot re-increase the effective step size if the envelope has already decreased.

If one instead uses the raw effective step size
$$
r_t=\alpha_t\gamma_t,
$$
and defines
$$
A_t=\frac{H_t}{r_t},
$$
then $A_t$ need not be monotone, even when $H_t$ is monotone. The proof still yields a data-dependent variation bound.

\begin{proposition}[Variation-dependent regret with raw alignment]
Consider the AdamX-OCO update with $r_t=\alpha_t\gamma_t$ instead of the monotone envelope $q_t$. Then for every $u\in\K$,
\begin{equation}
R_T(u)
\le
\frac{1}{2}\norm{\theta_1-u}_{A_1}^2
+
\frac{1}{2}\sum_{t=2}^T\norm{\theta_t-u}_{(A_t-A_{t-1})_+}^2
+
\frac{1}{2}\sum_{t=1}^T\norm{g_t}_{A_t^{-1}}^2,
\end{equation}
where $(A_t-A_{t-1})_+$ denotes the positive part of the symmetric matrix $A_t-A_{t-1}$.
\end{proposition}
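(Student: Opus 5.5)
The proof reuses the argument of the convex regret theorem verbatim up to the telescoping step, with $q_t$ replaced by $r_t$ throughout. We first check that Lemma~\ref{lem:one-step} still applies. Its proof uses only the optimality condition of the projected step in the metric $A_t=H_t/r_t$, together with the polarization identity and Young's inequality. None of these use monotonicity of $A_t$ or any property of the step size other than $r_t>0$, and positivity holds because $\alpha_t>0$ and $\gamma_t\ge e^{-\lambda}>0$. Combining convexity with Lemma~\ref{lem:one-step} and summing over $t$ therefore gives
\begin{equation*}
R_T(u)\le \frac12\sum_{t=1}^T\left(\norm{\theta_t-u}_{A_t}^2-\norm{\theta_{t+1}-u}_{A_t}^2\right)+\frac12\sum_{t=1}^T\norm{g_t}_{A_t^{-1}}^2 .
\end{equation*}
The last sum is already the third term of the claim, so only the first sum needs work.

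\textbf{Telescoping step.} Reindex the negative terms (shift $t+1\to t$) to obtain
\begin{equation*}
\norm{\theta_1-u}_{A_1}^2+\sum_{t=2}^T\left(\norm{\theta_t-u}_{A_t}^2-\norm{\theta_t-u}_{A_{t-1}}^2\right)-\norm{\theta_{T+1}-u}_{A_T}^2 .
\end{equation*}
The final term is nonpositive because $A_T\succ 0$, so we drop it. Each middle term equals $(\theta_t-u)^\top(A_t-A_{t-1})(\theta_t-u)$.

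\textbf{Main step: replacing the difference by its positive part.} Since $A_t-A_{t-1}$ is symmetric, write it as $(A_t-A_{t-1})_+-(A_t-A_{t-1})_-$ with both parts positive semidefinite. Discarding the negative part gives $x^\top(A_t-A_{t-1})x\le x^\top(A_t-A_{t-1})_+x=\norm{x}_{(A_t-A_{t-1})_+}^2$. Here the matrices are diagonal, so this is just a coordinate-wise statement: $(H_{t,i}/r_t-H_{t-1,i}/r_{t-1})_+$. This is the only place where the proof departs from the monotone case, and it is the step where non-monotonicity of $A_t$ could cause trouble; replacing the difference by its positive part absorbs that issue. Halving and combining the three pieces yields exactly the stated bound.
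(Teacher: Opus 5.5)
Your proposal is correct and takes the route the paper implies. The paper gives no separate proof for this proposition; it only says the argument of the convex regret theorem ``still yields'' the bound, and your write-up is exactly that argument with $q_t$ replaced by $r_t$, the one-step lemma applied in the non-monotone metric $A_t=H_t/r_t$, and the monotonicity step replaced by $x^\top(A_t-A_{t-1})x\le x^\top(A_t-A_{t-1})_+x$ (your checks that the lemma needs only $r_t>0$ and that $-\|\theta_{T+1}-u\|_{A_T}^2\le 0$ can be dropped fill in details the paper leaves implicit).
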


This statement is more faithful to the practical optimizer. It says that AdamX-OCO keeps sublinear regret when the metric variation induced by the denominator and the alignment factor is controlled. In adversarial sequences, however, the cosine signal can oscillate, and the variation term may be large.

\paragraph*{Momentum and the Full AdamX Algorithm}

The full AdamX update uses $\hat m_t$ rather than $g_t$. In OCO, convexity gives
$$
f_t(\theta_t)-f_t(u)\le \ip{g_t}{\theta_t-u},
$$
whereas the projected update controls a term involving $\hat m_t$. Consequently,
$$
\ip{g_t}{\theta_t-u}
=
\ip{\hat m_t}{\theta_t-u}
+
\ip{g_t-\hat m_t}{\theta_t-u}.
$$
The first term can be handled by adaptive mirror descent. The second is a momentum-bias term. A direct bound gives
\begin{equation}
    \sum_{t=1}^T\ip{g_t-\hat m_t}{\theta_t-u}
\le
D_\infty\sum_{t=1}^T\norm{g_t-\hat m_t}_1,
\end{equation}
which can be linear in $T$ for adversarial gradient sequences \cite{defossez2022a}.

\iffalse
\subsection{Strongly Convex Losses}

If each $f_t$ is $\mu$-strongly convex, one may aim for logarithmic regret. In that case the base step size should scale as $\alpha_t=\eta/t$, not $\eta/\sqrt{t}$. The adaptive metric should be monotone and the effective step size should be controlled. A choice is
$$
q_t=\min_{1\le k\le t}\frac{\eta\gamma_k}{k}.
$$
Under bounded gradients, compactness, monotone adaptive metrics, and a suitable relation between $\eta$ and $\mu$, one can derive $O(\log T)$ regret by combining the standard strongly convex OGD argument with the adaptive-metric analysis above. The constants again deteriorate with $e^\lambda$.

\fi
\section{Experiments}

We evaluate the proposed algorithm against comparable optimizers using an experimental protocol inspired by DeepOBS~\cite{schneider2018deepobs} and AlgoPerf~\cite{Dahl2023AlgoPerf}. When designing empirical evaluations for deep learning optimizers, we focus on three key aspects.
\textbf{(1) Generalization.} The goal of optimization in deep learning is to learn models that generalize well to unseen data. Although some prior studies focus primarily on training metrics, improvements in training loss do not necessarily translate into better test performance. Accordingly, our evaluation emphasizes test-set performance throughout.
\textbf{(2) Stochasticity.} The observed performance can vary substantially due to random initialization. To mitigate the influence of these sources of randomness and ensure fair comparisons, all optimizers are evaluated using the same set of five random seeds, and results are reported as averages across runs.
\textbf{(3) Realistic evaluation setting.} Optimizer performance is highly dependent on the model architecture and dataset. Consequently, we adopt established benchmark architectures from DeepOBS~\cite{schneider2018deepobs} together with widely used datasets, ensuring evaluation on representative and commonly studied tasks. For consistency, we evaluate all optimizers, including ours, with the default hyperparameters \cite{pmlr-v139-schmidt21a}. 

Following this principles, the main evaluation tool is the number of epochs necessary to achieve a predetermined test set accuracy. Unlike AlgoPerf\cite{Dahl2023AlgoPerf}, which is more focused on algorithmic speed, we do not evaluate on wall-clock runtime, which has well-known drawbacks, such as dependency on
hardware or weak reproducibility. By evaluating on epochs, we evaluate performance against the number of gradient evaluations,
which typically dominates the total computational costs. 
\subsubsection{Baseline Algorithms}
To evaluate AdamX, we compare against widely used first-order optimizers spanning adaptive, momentum-based, and non-adaptive methods. Specifically, we consider \textbf{SGD} \cite{sgd}; \textbf{Adagrad}, which accumulates squared historical gradients \cite{ADAGRAD}; \textbf{RMSProp}, which replaces Adagrad's cumulative statistic with an exponential moving average \cite{rmsprop}; \textbf{Adam} \cite{adam}; \textbf{AdamW}, which decouples weight decay from adaptive updates \cite{adamw}; \textbf{AMSGrad}, which enforces a non-decreasing second-moment estimate \cite{reddi_amsgrad}; \textbf{RAdam}, which introduces variance rectification during early training \cite{radam}; \textbf{Yogi}, which controls excessive growth of the variance estimate \cite{yogi}; \textbf{Lion}, which updates parameters using the sign of the momentum vector \cite{lion}; and \textbf{Adan}, which incorporates Nesterov-style momentum into adaptive moment estimation \cite{xie2024adan}.

\begin{figure}[t]
    \centering

    \begin{minipage}{0.48\linewidth}
        \centering
        \includegraphics[width=\linewidth]{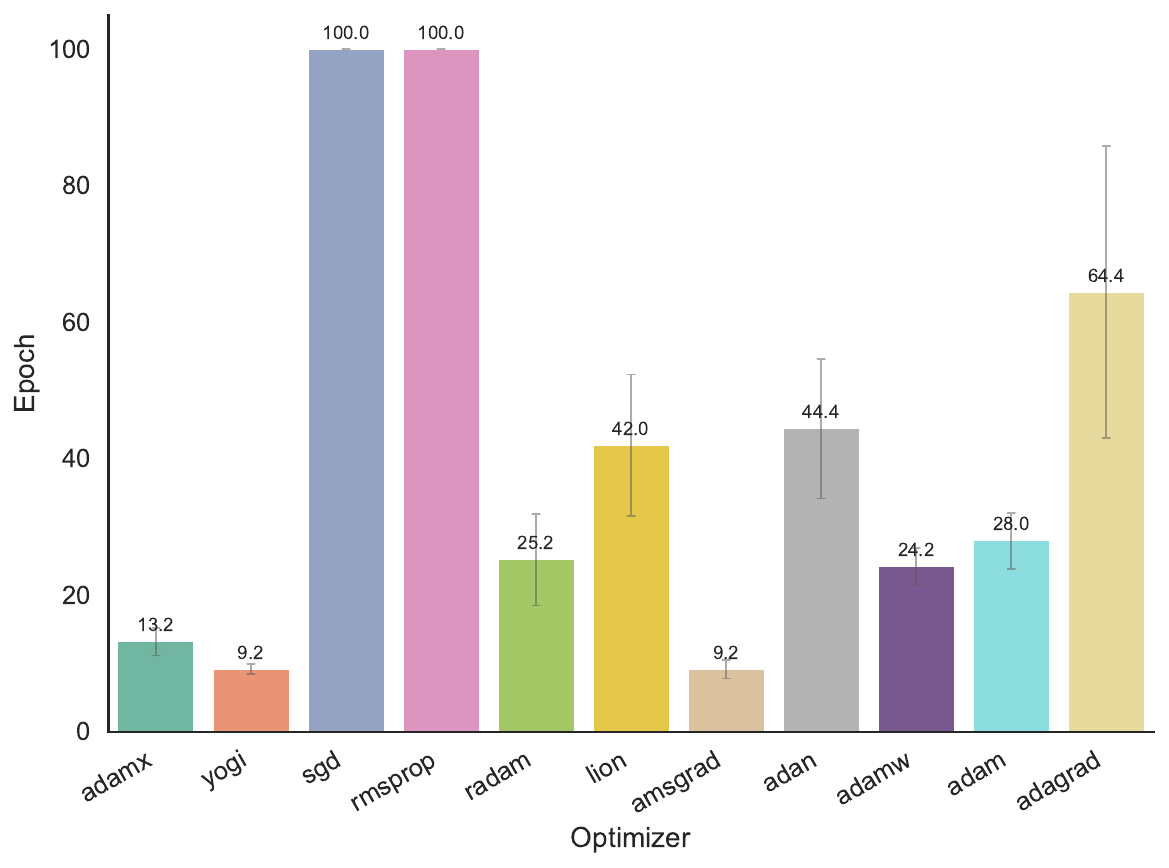}
        \caption{Number of epochs to reach the test accuracy target. Each optimizer is evaluated over five seeds. Yogi, AMSGrad, and AdamX achieve the best performance, while SGD and RMSProp fail to reach the target.}
        \label{fig:bar_plot}
    \end{minipage}
    \hfill
    \begin{minipage}{0.48\linewidth}
        \centering
        \includegraphics[width=\linewidth]{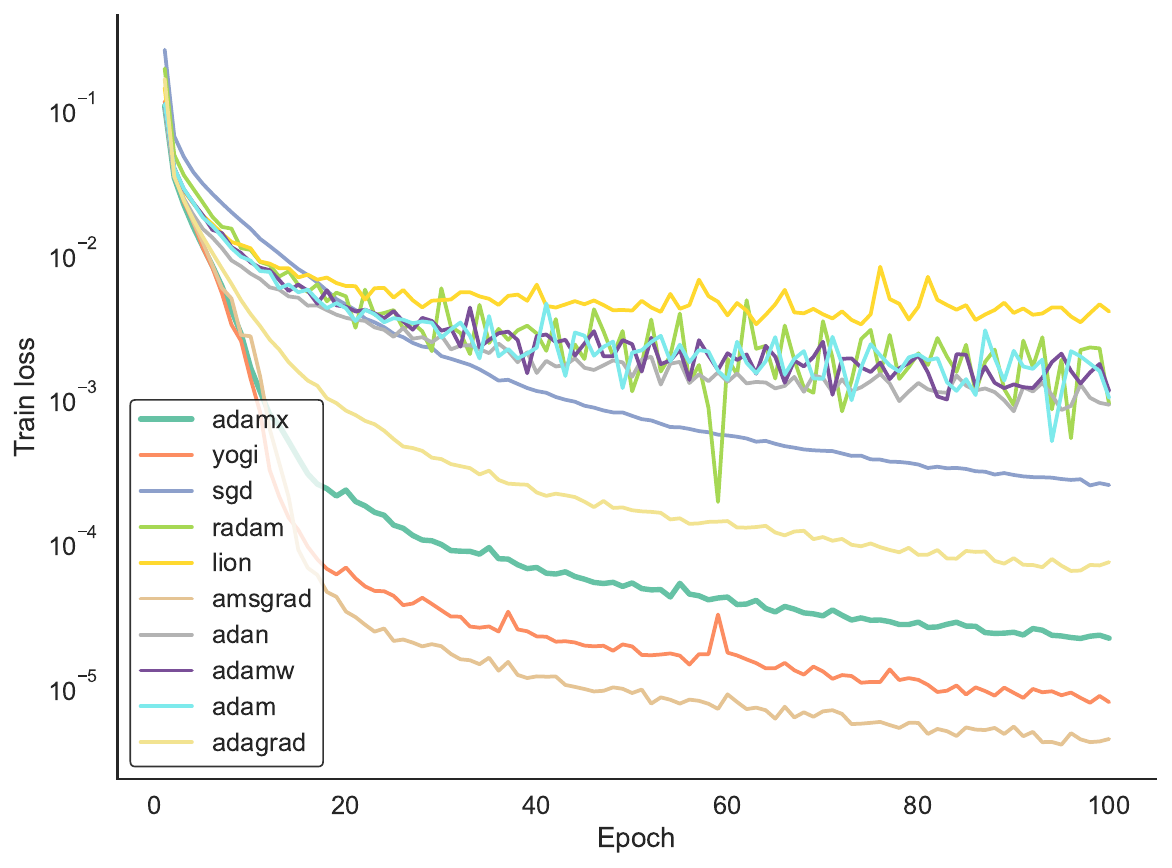}
        \caption{Training loss over 100 epochs on MNIST. Methods with better generalization (AMSGrad, Yogi, AdamX) also exhibit lower training loss. RMSProp is omitted due to significantly higher loss values. }
        \label{fig:train_loss1}
    \end{minipage}

\end{figure}
\subsubsection{MNIST}
On MNIST, we use a three-layer CNN with default hyperparameters and measure the epochs required to reach $0.994$ test accuracy, up to $100$ epochs. Figure~\ref{fig:bar_plot} shows that AdamX is competitive with the best-performing optimizers, AMSGrad and Yogi. SGD does not reach the target, RMSprop fails in all runs due to gradient collapse, and Adagrad shows the largest variance across seeds. The training-loss curves in Figure~\ref{fig:train_loss1} are consistent with these results, with AMSGrad, AdamX, and Yogi among the fastest methods to reach the target.

\subsubsection{CIFAR-10}
\begin{figure}[t]
    \centering

    \begin{minipage}{0.48\linewidth}
        \centering
        \includegraphics[width=\linewidth]{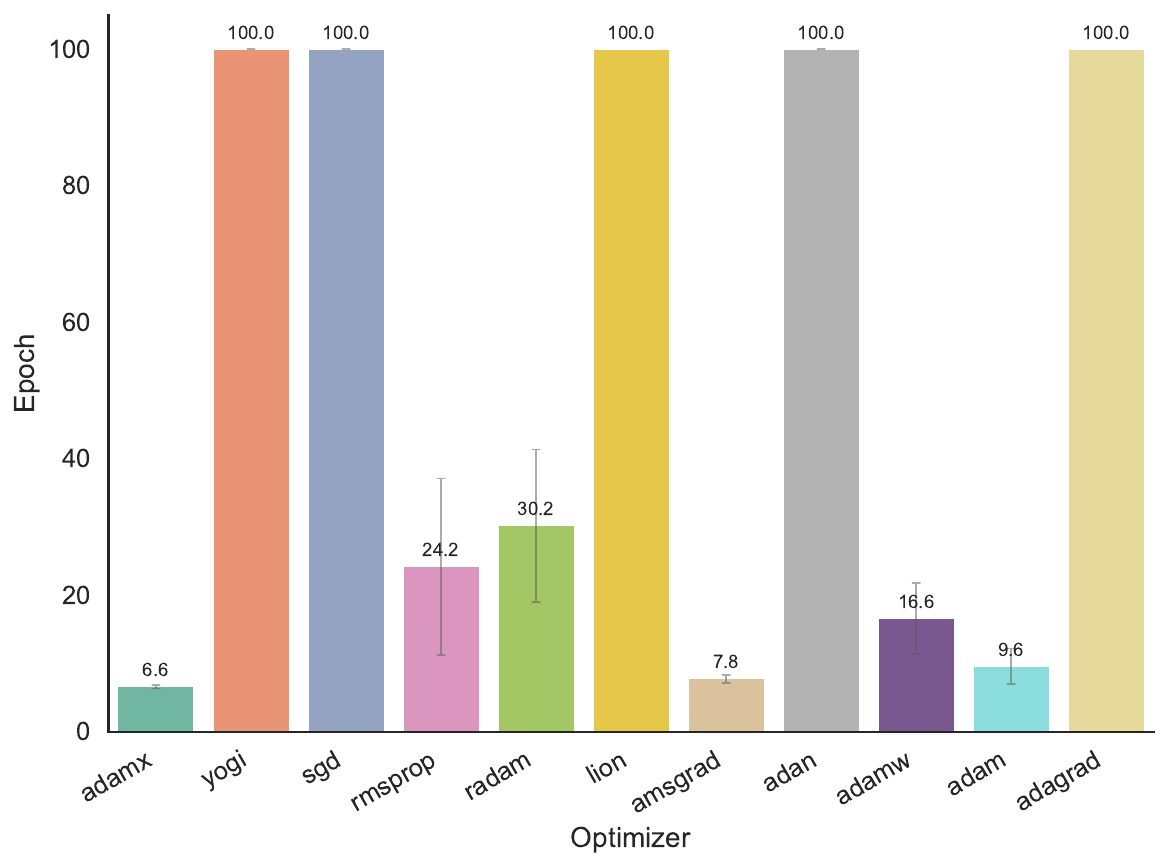}
        \caption{Number of epochs to reach the test accuracy target. Each optimizer is evaluated over five seeds. AdamX obtains the lowest mean number of epochs to reach the target, with similar values obtained by AMSGrad and Adam.}
        \label{fig:bar_plot2}
    \end{minipage}
    \hfill
    \begin{minipage}{0.48\linewidth}
        \centering
        \includegraphics[width=\linewidth]{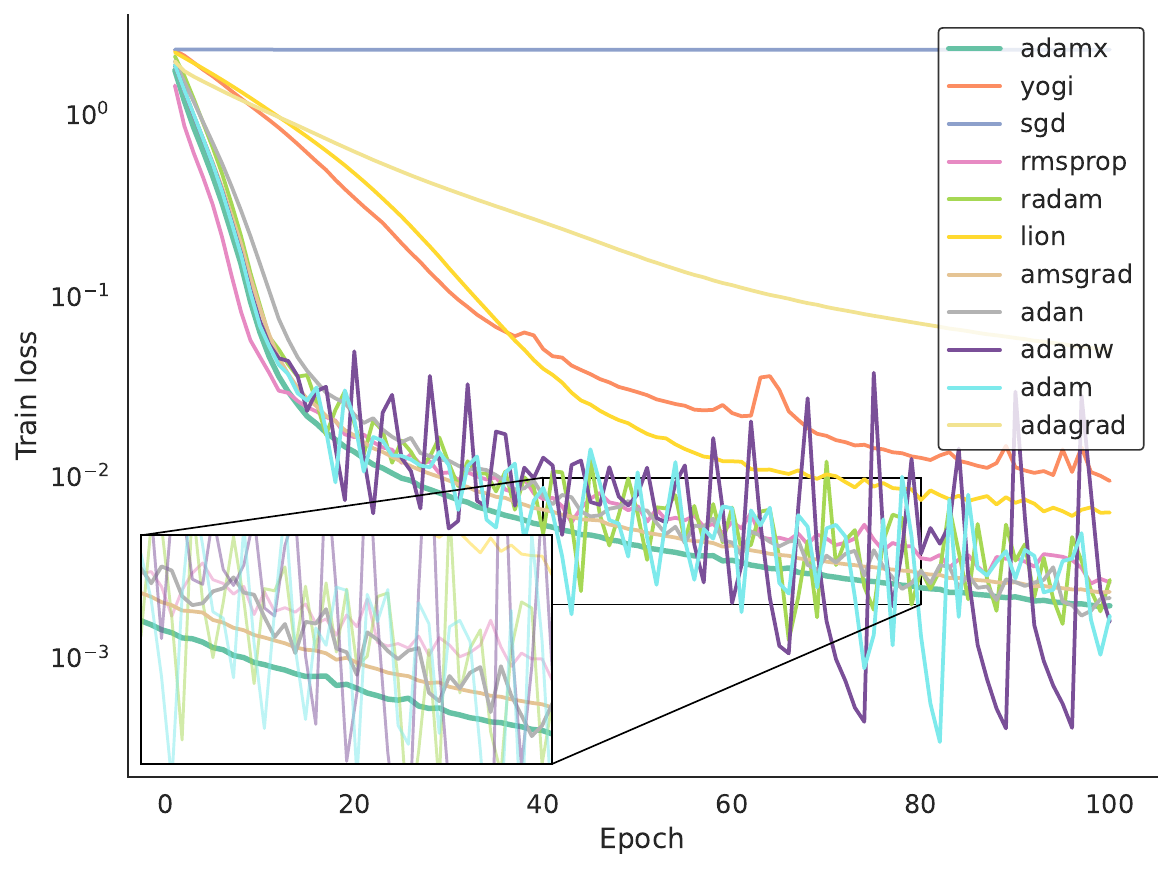}
        \caption{Training loss over 100 epochs on CIFAR-10. Methods with better generalization (AMSGrad, Yogi, AdamX) also exhibit lower training loss. RMSProp is omitted due to significantly higher loss values.}
        \label{fig:train_loss3}
    \end{minipage}

\end{figure}

CIFAR-10 is more challenging than MNIST; to focus on optimizer behavior, we use the fixed CifarNet architecture~\cite{jordan2024muon}. The target test accuracy is $0.84$, with a maximum of $100$ epochs.

Figure~\ref{fig:bar_plot2} shows that five out of eleven optimizers fail to reach the target, indicating that the threshold captures a demanding training regime. AdamX, AMSGrad, and Adam are the strongest methods, with AdamX achieving the lowest mean number of epochs and outperforming the closely related AMSGrad baseline. Figure~\ref{fig:train_loss3} further shows that AdamX, AMSGrad, and Adan exhibit smoother training-loss trajectories, whereas RAdam, Adam, and AdamW display larger oscillations.

Table~\ref{tab:epochs_target} summarizes the results. Overall, AdamX and AMSGrad require the fewest gradient evaluations to reach the target, with AdamX comparing favorably on CIFAR-10. The results also illustrate that lower training loss does not necessarily imply better generalization; for example, Adagrad obtains a low MNIST training loss but requires more epochs to reach the test-accuracy target.

\begin{table}[!ht]
\caption{Number of epochs until target accuracy, and training loss at 100 Epochs. Lower is better. Maximum number of runs is 100. \textbf{Best}, \underline{second-best}, and \textit{third-best}
results are highlighted.}
\label{tab:epochs_target}
\centering
\begin{tabular}{lcccc}
\toprule
&  \multicolumn{2}{c}{MNIST} &  \multicolumn{2}{c}{CIFAR-10} \\
\midrule
\textbf{Optimizer} &
\textbf{Epochs ($\downarrow$)} &
\textbf{Train Loss ($\downarrow$) ($\times 10^{-5}$)}  &\textbf{Epochs ($\downarrow$)} &
\textbf{Train Loss($\downarrow$)}\\
\midrule
AdamX (Ours)    &  \textit{13.2 $\pm$ 2.00} &  \textit{2.17 $\pm$ 0.41} &   \textbf{6.6 $\pm$ 0.54} & \textit{1.96e-03} \\
\midrule
Yogi       & \textbf{9.2 $\pm$ 0.734}& \underline{0.79 $\pm$ 0.13 } &100& 9.59e-03\\
SGD  & 100 &  25.00 $\pm$ 0.52& 100& 2.30\\
RMSProp   &  100 & 1609.97 $\pm$ 295.46  & 24.2 $\pm$ 28.94& 2.64e-03\\
Radam      & 25.2 $\pm$6.67 & 90.39 $\pm$ 13.29  &30.2 $\pm$24.93&2.72e-03\\
Lion     &  42.0 $\pm$10.36 & 397.40 $\pm$ 4.07  & 100.0&6.42e-03\\
AMSGrad   &  \textbf{9.2 $\pm$1.35}&  \textbf{0.44 $\pm$ 0.02 }& \underline{7.8 $\pm$ 1.30}&2.34e-03\\
Adan     &  44.4 $\pm$ 10.19 &  89.99 $\pm$ 12.00 & 100&2.16e-03\\
AdamW    &  24.2 $\pm$ 2.73 &  112.03 $\pm$ 67.39 &16.6 $\pm$ 11.65&\textbf{1.60e-03}\\
Adam   & 28.0 $\pm$ 4.09 &  100.33 $\pm$35.05 & \textit{9.6 $\pm$ 5.86}&\underline{1.72e-03}\\
Adagrad     &  64.4 $\pm$21.39 &  7.32 $\pm$ 1.645& 100&5.08e-02\\

\bottomrule
\end{tabular}
\end{table}

\section{Conclusions}

We presented AdamX, an adaptive first-order optimizer that augments Adam/AMSGrad-style updates with a bounded cosine similarity controller. A simplified OCO analysis shows that, under a monotone envelope on the cosine-scaled step size, the alignment mechanism is compatible with standard adaptive regret guarantees. Empirically, AdamX is competitive with ten established optimizers and achieves the best result in the considered CIFAR-10 setting. These results indicate that directional alignment is a promising lightweight source of adaptivity, motivating future work on hyperparameter robustness, second-order extensions, and larger-scale training regimes.

\iffalse 
We have presented AdamX, an adaptive first-order optimization algorithm that incorporates a bounded cosine-similarity controller into an Adam-style update rule. 
%
Our theoretical analysis studies a simplified OCO variant and shows that, when the cosine-scaled step size is controlled by a monotone envelope, the proposed alignment mechanism is compatible with standard adaptive online-learning regret guarantees.
%
 We performed empirical studies, where AdamX demonstrated competitive performance across benchmark datasets against other 10 well established algorithms. In particular, AdamX was ranked first in the CIFAR10 dataset. These results suggest that directional alignment can serve as a useful source of adaptivity when combined with already known algorithms, such as Adam or AMSgrad. In future work, we intend to further analyze the robustness of the method to hyperparameter tuning and adapt the same method to second-order optimizers, such as SOAP. Also, as deep learning steers towards more complex architectures and training regimes, a valuable direction is to study this algorithm for Large Language Model (LLM) and Diffusion Model (DM) pretraining. 
\fi

\newpage
\begin{credits}
\subsubsection{\ackname} This work was partially supported by NOVA LINCS (UID/04516) funded by FCT IP, and the Neuraspace AI Fights Space Debris project (C626449889-00463050), co-funded by Recovery and Resilience Plan and NextGeneration
EU Funds, www.recuperarportugal.gov.pt. The authors have no competing interests to declare that are
relevant to the content of this article.

\vspace{4pt}

\includegraphics[width=0.5\textwidth]{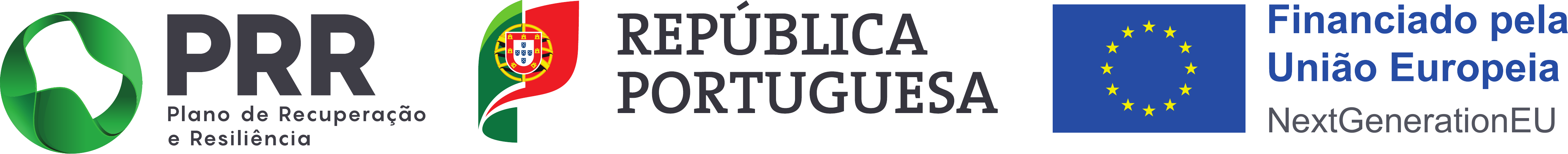}
\end{credits}

%
% ---- Bibliography ----
%
% BibTeX users should specify bibliography style 'splncs04'.
% References will then be sorted and formatted in the correct style.
%
 \bibliographystyle{splncs04}
 \bibliography{fixedbib}

@article{rmsprop,
  title = {Lecture 6.5-rmsprop: Divide the gradient by a running average of its recent magnitude},
  journal = {COURSERA: Neural networks machine learning},
  volume = {4},
  number = {2},
  pages = {26},
  year = {2012},
}

@inproceedings{vyas2025soap,
  title = {{SOAP}: Improving and stabilizing shampoo using adam for language modeling},
  author = {N. Vyas and D. Morwani and R. Zhao and I. Shapira and D. Brandfonbrener and L. Janson and S. Kakade},
  booktitle = {ICLR},
  year = {2025},
}

@inproceedings{gupta2018shampoo,
  title = {Shampoo: Preconditioned stochastic tensor optimization},
  author = {V. Gupta and T. Koren and Y. Singer},
  booktitle = {ICML},
  pages = {1842--1850},
  year = {2018},
  organization = {PMLR},
}

@article{cauchy1847methode,
  author = {A. L. Cauchy},
  title = {M{\'e}thode g{\'e}n{\'e}rale pour la r{\'e}solution des syst{\`e}mes d'{\'e}quations simultan{\'e}es},
  journal = {Comptes Rendus Hebd Seances Acad Sci},
  volume = {25},
  pages = {536--538},
  year = {1847},
}

@incollection{almeida1998parameter,
  author = {L. B. Almeida and T. Langlois and J. D. Amaral and A. Plakhov},
  title = {Parameter Adaptation in Stochastic Optimization},
  booktitle = {On-Line Learning Neural Networks},
  publisher = {CUP},
  year = {1998},
}

@mastersthesis{martinezrubio2017adam,
  author = {D. M. Rubio},
  title = {Convergence Analysis of an Adaptive Method of Gradient Descent},
  school = {U. Oxf.},
  type = {MSc thesis},
  year = {2017}
}

@inproceedings{adamw,
  title = {Decoupled Weight Decay Regularization},
  author = {I. Loshchilov and F. Hutter},
  booktitle = {ICLR},
  year = {2019},
}

@article{jiang2025gala,
  title   = {Online Learning-guided Learning Rate Adaptation via Gradient Alignment},
  author  = {R. Jiang and A. Kavis and A. Mokhtari},
  journal = {arXiv preprint arXiv:2506.08419},
  year    = {2025}
  }

@inproceedings{
gunes2018online,
title={Online Learning Rate Adaptation with Hypergradient Descent},
author={A.G. Baydin and R. Cornish and D.M. Rubio and M. Schmidt and F. Wood},
booktitle={ICLR},
year={2018}
}

@article{ADAGRAD,
  author = {J. Duchi and E. Hazan and Y. Singer},
  title = {Adaptive Subgradient Methods for Online Learning and Stochastic Optimization},
  journal = {JMLR},
  volume = {12},
  pages = {2121--2159},
  year = {2011},
}

@InProceedings{pmlr-v139-schmidt21a,
  title = 	 {Descending through a Crowded Valley - Benchmarking Deep Learning Optimizers},
  author =       {Schmidt, Robin M and Schneider, Frank and Hennig, Philipp},
year = 	 {2021},
booktitle={ICML}
}

@inproceedings{adam,
  author = {D. P. Kingma and J. Ba},
  title = {Adam: A Method for Stochastic Optimization},
  booktitle = {ICLR},
  year = {2015},
}

@article{defossez2022a,
  title = {A Simple Convergence Proof of {Adam} and {Adagrad}},
  author = {A. D{\'e}fossez and L. Bottou and F. Bach and N. Usunier},
  journal = {TMLR},
  issn = {2835-8856},
  year = {2022},
  note = {},
}

@inproceedings{schneider2018deepobs,
  title = {Deep{OBS}: A Deep Learning Optimizer Benchmark Suite},
  author = {F. Schneider and L. Balles and P. Hennig},
  booktitle = {ICLR},
  year = {2019},
}

@article{Dahl2023AlgoPerf,
  title = {{Benchmarking Neural Network Training Algorithms}},
  author = {G. E. Dahl and F. Schneider and Z. Nado and N. Agarwal and C. S. Sastry and P. Hennig and S. Medapati and R. Eschenhagen and P. Kasimbeg and D. Suo and J. Bae and J. Gilmer and A. L. Peirson and B. Khan and R. Anil and M. Rabbat and S. Krishnan and D. Snider and E. Amid and K. Chen and C. J. Maddison and R. Vasudev and M. Badura and A. Garg and P. Mattson},
  year = {2023},
  archiveprefix = {arXiv},
  eprint = {2306.07179},
  journal={arXiv preprint arXiv:2306.07179},
}

@article{sgd,
  title = {A Stochastic Approximation Method},
  author = {H. Robbins and S. Monro},
  journal = {Annals Mathematical Statistics},
  volume = {22},
  number = {3},
  pages = {400--407},
  year = {1951},
}

@inproceedings{yogi,
  author = {M. Zaheer and S. Reddi and D. Sachan and S. Kale and S. Kumar},
  booktitle = {NeurIPS},
  title = {Adaptive Methods for Nonconvex Optimization},
  volume = {31},
  year = {2018},
}

@article{xie2024adan,
  title = {Adan: Adaptive Nesterov momentum algorithm for faster optimizing deep models},
  author = {X. Xie and P. Zhou and H. Li and Z. Lin and S. Yan},
  journal = {IEEE TPAMI},
  year = {2024},
}

@misc{jordan2024muon,
  title = {Muon: An optimizer for hidden layers in neural networks},
  author = {K. Jordan and Y. Jin and V. Boza and Y. Jiacheng and F. Cesista and L. Newhouse and J. Bernstein},
  volume = {6},
  number = {3},
  pages = {4},
  year = {2024},
}

@inproceedings{reddi_amsgrad,
  title = {On the Convergence of Adam and Beyond},
  author = {S. J. Reddi and S. Kale and S. Kumar},
  booktitle = {ICLR},
  year = {2018},
}

@inproceedings{lion,
  title = {Symbolic Discovery of Optimization Algorithms},
  author = {X. Chen and C. Liang and Da Huang and E. Real and K. Wang and H. Pham and X. Dong and T. Luong and C. J. Hsieh and Y. Lu and Q. V. Le},
  booktitle = {NeurIPS},
  year = {2023},
}

@inproceedings{radam,
  title = {On the Variance of the Adaptive Learning Rate and Beyond},
  author = {L. Liu and H. Jiang and P. He and W. Chen and X. Liu and J. Gao and J. Han},
  booktitle = {ICLR},
  year = {2020},
}

@article{adabelief,
  title = {Adabelief optimizer: Adapting stepsizes by the belief in observed gradients},
  author = {J. Zhuang and T. Tang and Y. Ding and S. C. Tatikonda and N. Dvornek and X. Papademetris and J. Duncan},
  journal = {NeurIPS},
  year = {2020},
}
 %\bibliography{mybibliography}

\end{document}